\documentclass[11pt]{article}%
\usepackage{amsfonts}
\usepackage{amssymb}
\usepackage{graphicx}
\usepackage{setspace}
\usepackage[round]{natbib}
\bibliographystyle{ba.bst}
\usepackage{amsmath}%
\usepackage{amsthm}%
\setcounter{MaxMatrixCols}{30}
\usepackage{palatino}
\usepackage{array}
\newcolumntype{C}{>{\centering\arraybackslash}p{2.2cm}}
\usepackage{paralist}
\usepackage[top=8pc,bottom=8pc,left=8pc,right=8pc]{geometry}

\newtheorem{theorem}{Theorem}

\newtheorem{proposition}[theorem]{Proposition}

\newcommand{\prox}{ \mathop{\mathrm{prox}} }

\newcommand{\enorm}[1]{\Vert #1 \Vert_2}
\numberwithin{equation}{section}
\theoremstyle{plain}
\newtheorem{thm}{Theorem}[section]

\newcommand{\defeq}{\mathrel{\mathop:}=}

\title{Bayesian $l_0$ Regularized Least Squares}

\author{Nicholas G. Polson\footnote{Nicholas G. Polson is Professor of Econometrics and Statistics at the University of Chicago Booth School of Business. email: ngp@chicagobooth.edu}\\
\textit{Booth School of Business}\\
\textit{University of Chicago}
\and
Lei Sun\footnote{Lei Sun is PhD Candidate at the Department of Statistics, University of Chicago. email: sunl@uchicago.edu}\\
\textit{Department of Statistics}\\
\textit{University of Chicago}\\
\\
}

\date{First Draft: April 30, 2017\\
This Draft: Aug 12, 2018}

\singlespacing

\begin{document}

\maketitle
\begin{abstract}
\noindent
Bayesian $l_0$-regularized least squares is a variable selection technique for high dimensional predictors.  The challenge is optimizing a non-convex objective function via search over model space consisting of all possible predictor combinations.  Spike-and-slab (a.k.a. Bernoulli-Gaussian) priors are the gold standard for Bayesian variable selection, with a caveat of computational speed and scalability. Single Best Replacement (SBR) provides a fast scalable alternative. We provide a link between Bayesian regularization and proximal updating, which provides an equivalence between finding a posterior mode and a posterior mean with a different regularization prior. This allows us to use SBR to find the spike-and-slab estimator. To illustrate our methodology, we provide simulation evidence and a real data example on the statistical properties and computational efficiency of SBR versus direct posterior sampling using spike-and-slab priors.  Finally, we conclude with directions for future research.

\vspace{0.5pc}

\noindent {\bf Keywords:} 
Spike-and-Slab Prior, $l_0$ Regularization, Proximal Updating, Single Best Replacement, Lasso, Variable Selection, Sparsity, Bayes, Model Choice
\end{abstract}

\newpage

\section{Introduction}

Bayesian $l_0$ regularization is an attractive solution for high dimensional variable selection as it directly penalizes the number of predictors.  The caveat is the need to search over all possible model combinations, as a full solution requires enumeration over all possible models which is NP-hard.  The gold standard for Bayesian variable selection are spike-and-slab priors, or Bernoulli-Gaussian mixtures \citep{mitchell1988, george1993, scott2014}.  Whilst spike-and-slab priors provide full model uncertainty quantification, they can be hard to scale to very high dimensional problems, and can have poor sparsity properties \citep{amini2012}.  On the other hand, techniques like proximal algorithms \citep{polson2015proximal, polson2017proximal} can solve non-convex optimization problems which are fast and scalable, but they generally don't provide a full assessment of model uncertainty \citep{jeffreys1961,hans2009,scott2010,li2010bayesian,marjanovic2013}.

Our goal is to build on the single best replacement (SBR) algorithm of \cite{soussen2011} as one approach to Bayesian $l_0$ regularization, in contrast with other methods based on proximal algorithms \citep{parikh2014proximal,polson2015proximal,polson2017proximal}. Our approach also builds on other Bayesian regularization methods including, for example, the Bayesian bridge \citep{polson2014}, horseshoe regularization \citep{carvalho2010,bhadra2016}, SVMs \citep{polson2011}, Bayesian lasso \citep{hans2009,park2008,carlin1991}, Bayesian elastic net \citep{li2010bayesian,hans2011}, spike-and-slab lasso \citep{rockova2016}, and global-local shrinkage priors \citep{bhadra2016,griffin2010}.

To fix notation, statistical regularization requires the specification of a measure of fit, denoted by $l\left(\beta\right)$ and a penalty function, denoted by $\text{pen}_\lambda\left(\beta\right)$, where $\lambda$ is a global regularization parameter.  From a Bayesian perspective,  $l\left(\beta\right)$ and $\text{pen}_\lambda\left(\beta\right)$ correspond to the negative logarithms of the likelihood and prior distribution, respectively.  Regularization leads to an optimization problem of the form 
\begin{equation}
\label{eqn:reg}
\begin{aligned}
& \underset{\beta \in \mathbb{R}^p}{\text{minimize}}
& & l\left(\beta\right) + \text{pen}_\lambda\left(\beta\right) \; . 
\end{aligned}
\end{equation}
Taking a probabilistic approach leads to a Bayesian hierarchical model
$$
p(y \mid \beta) \propto \exp\{-l(\beta)\} \; , \quad p(\beta) \propto \exp\{ -\text{pen}_\lambda\left(\beta\right) \} \ .
$$
The solution to the minimization problem estimated by regularization corresponds to the posterior mode, 
$ \hat{\beta} = {\rm arg \; max}_\beta \; p( \beta|y) $, where $ p(\beta|y)$ denotes the posterior distribution. 
For example, regression with a least squares log-likelihood 
subject to a penalty such as an $l_2$-norm (ridge) Gaussian probability model or $l_1$-norm (lasso) double exponential probability model.

The rest of the paper is outlined as follows.  Section \ref{s&s} defines the Bayesian $l_0$ regularization problem and explores the connections with spike-and-slab priors.  Section \ref{phi} introduces a novel connection between regularization and Bayesian inference.  Section \ref{survey} surveys recent developments on $l_0$ regularization and best subset selection problems.  Section \ref{sbr} describes the SBR algorithm for $l_0$ regularization.  Section \ref{eg} provides a comparison between SBR and other variable selection methods and models including Lasso and elastic net \citep{tibshirani1996,zou2005}, Bayesian bridge \citep{polson2014}, and spike-and-slab \citep{george1993}.  Finally, Section \ref{dis} concludes with directions for future research.

\section{Bayesian $l_0$ regularization \label{s&s}}

Consider a standard Gaussian linear regression model, where $X = [X_1, \ldots, X_p]\in \mathbb{R}^{n \times p}$ is a design matrix, $\beta = (\beta_1, \ldots, \beta_p)^T\in\mathbb{R}^p$ is a $p$-dimensional coefficient vector, and $e$ is an $n$-dimensional independent Gaussian noise.  After centralizing $y$ and all columns of $X$, we ignore the intercept term in the design matrix $X$ as well as $\beta$, and we can write
\begin{equation}
\label{eqn:linreg}
y = X\beta + e \ , \ \  \text{where } e \sim N(0, \sigma_e^2I_n) \ .
\end{equation}

To specify a prior distribution $p\left(\beta\right)$, we impose a sparsity assumption on $\beta$, where only a small portion of all $\beta_i$'s are non-zero.  In other words, $\|\beta\|_0 = k \ll p$, where $\|\beta\|_0 \defeq \#\{i : \beta_i\neq0\}$, the cardinality of the support of $\beta$, also known as the $l_0$ pseudo-norm of $\beta$.  A multivariate Gaussian prior ($l_2$ norm) leads to poor sparsity properties in this situation \citep[see, e.g.,][]{polson2010shrink}.

Sparsity-inducing prior distributions for $\beta$ can be constructed to impose sparsity.  The gold standard is a spike-and-slab priors \citep{jeffreys1961,mitchell1988,george1993}. Under these assumptions, each $\beta_i$ exchangeably follows a mixture prior consisting of $\delta_0$, a point mass at $0$, and a Gaussian distribution centered at zero. Hence we write,

\begin{equation}
\label{eqn:ss}
\beta_i | \theta, \sigma_\beta^2 \sim (1-\theta)\delta_0 + \theta N\left(0, \sigma_\beta^2\right) \ .
\end{equation}
Here $\theta\in \left(0, 1\right)$ controls the overall sparsity in $\beta$ and $\sigma_\beta^2$ accommodates non-zero signals.  This family is termed as the Bernoulli-Gaussian mixture model in the signal processing community.

A useful re-parameterization, the parameters $\beta$ is given by two independent random variable vectors $\gamma = \left(\gamma_1, \ldots, \gamma_p\right)'$ and $\alpha = \left(\alpha_1, \ldots, \alpha_p\right)'$ such that $\beta_i  =  \gamma_i\alpha_i$, with probabilistic structure
\begin{equation}
\label{eq:bg}
\begin{array}{rcl}
\gamma_i|\theta & \sim & \text{Bernoulli}(\theta) \ ;
\\
\alpha_i | \sigma_\beta^2 &\sim & N\left(0, \sigma_\beta^2\right) \ .
\\
\end{array}
\end{equation}
Since $\gamma_i$ and $\alpha_i$ are independent, the joint prior density becomes
$$
p\left(\gamma_i, \alpha_i \mid \theta, \sigma_\beta^2\right) =
\theta^{\gamma_i}\left(1-\theta\right)^{1-\gamma_i}\frac{1}{\sqrt{2\pi}\sigma_\beta}\exp\left\{-\frac{\alpha_i^2}{2\sigma_\beta^2}\right\}
\ , \ \ \ \text{for } 1\leq i\leq p \ .
$$
The indicator $\gamma_i\in \{0, 1\}$ can be viewed as a dummy variable to indicate whether $\beta_i$ is included in the model. \citep{soussen2011}

Let $S = \{i: \gamma_i = 1\} \subseteq \{1, \ldots, p\}$ be the ``active set" of $\gamma$, and $\|\gamma\|_0 = \sum\limits_{i = 1}^p\gamma_i$ be its cardinality.  The joint prior on the vector $\{\gamma, \alpha\}$ then factorizes as
$$
\begin{array}{rcl}
p\left(\gamma, \alpha \mid \theta, \sigma_\beta^2\right) & = & \prod\limits_{i = 1}^p p\left(\alpha_i, \gamma_i \mid \theta, \sigma_\beta^2\right) \\
& = & 
\theta^{\|\gamma\|_0}
\left(1-\theta\right)^{p - \|\gamma\|_0}
\left(2\pi\sigma_\beta^2\right)^{-\frac p2}\exp\left\{-\frac1{2\sigma_\beta^2}\sum\limits_{i = 1}^p\alpha_i^2\right\} \ .
\end{array}
$$

Let $X_\gamma \defeq \left[X_i\right]_{i \in S}$ be the set of ``active explanatory variables" and $\alpha_\gamma \defeq \left(\alpha_i\right)'_{i \in S}$ be their corresponding coefficients.  We can write $X\beta = X_\gamma \alpha_\gamma$.  The likelihood can be expressed in terms of $\gamma$, $\alpha$ as
$$
p\left(y \mid \gamma, \alpha, \theta, \sigma_e^2\right)
=
\left(2\pi\sigma_e^2\right)^{-\frac n2}
\exp\left\{
-\frac1{2\sigma_e^2}\left\|y - X_\gamma \alpha_\gamma\right\|_2^2
\right\} \ .
$$

Under this re-parameterization by $\left\{\gamma, \alpha\right\}$, the posterior is given by

$$
\begin{array}{rcl}
p\left(\gamma, \alpha \mid \theta, \sigma_\beta^2, \sigma_e^2, y\right) & \propto &
p\left(\gamma, \alpha \mid \theta, \sigma_\beta^2\right)
p\left(y \mid \gamma, \alpha, \theta, \sigma_e^2\right)\\
& \propto &
\exp\left\{-\frac1{2\sigma_e^2}\left\|y - X_\gamma \alpha_\gamma\right\|_2^2
-\frac1{2\sigma_\beta^2}\left\|\alpha\right\|_2^2
-\log\left(\frac{1-\theta}{\theta}\right)
\left\|\gamma\right\|_0
\right\} \ .
\end{array}
$$
Our goal then is to find the regularized maximum a posterior (MAP) estimator $$\arg\max\limits_{\gamma, \alpha}p\left(\gamma, \alpha \mid \theta, \sigma_\beta^2, \sigma_e^2, y \right) \ .$$By construction, the $\gamma$ $\in\left\{0, 1\right\}^p$ will directly perform variable selection.  Spike-and-slab priors, on the other hand, will sample the full posterior and calculate the posterior probability of variable inclusion.

Finding the MAP estimator is equivalent to minimizing over $\left\{\gamma, \alpha\right\}$ the regularized least squares objective function \citep{soussen2011}.

\begin{equation}
\label{obj:map}
\min\limits_{\gamma, \alpha}\left\|y - X_\gamma \alpha_\gamma\right\|_2^2
+ \frac{\sigma_e^2}{\sigma_\beta^2}\left\|\alpha\right\|_2^2
+ 2\sigma_e^2\log\left(\frac{1-\theta}{\theta}\right)
\left\|\gamma\right\|_0 \ .
\end{equation}
This objective possesses several interesting properties:
\begin{enumerate}
\item The first term is essentially the least squares loss function. 
\item The second term looks like a ridge regression penalty and has connection with the signal-to-noise ratio (SNR) $\sigma_\beta^2/\sigma_e^2$.  Smaller SNR will be more likely to shrink the estimate of $\alpha$ towards $0$.  If $\sigma_\beta^2 \gg \sigma_e^2$, the prior uncertainty on the size of non-zero coefficients is much larger than the noise level, that is, the SNR is sufficiently large, this term can be ignored.  This is a common assumption in spike-and-slab framework in that people usually want $\sigma_\beta \to \infty$ or to be ``sufficiently large" in order to avoid imposing harsh shrinkage to non-zero signals.
\item If we further assume that $\theta < \frac12$, meaning that the coefficients are known to be sparse \textit{a priori}, then $\log\left(\left(1-\theta\right) / \theta\right) > 0$, and the third term can be seen as an $l_0$ regularization.
\end{enumerate}
Therefore, our Bayesian objective inference is connected to $l_0$-regularized least squares, which we summarize in the following proposition.

\begin{proposition} (Spike-and-slab MAP \& $l_0$ regularization)

For some $\lambda > 0$, assuming $\theta < \frac12$, $\sigma_\beta^2 \gg \sigma_e^2$, the Bayesian MAP estimate defined by (\ref{obj:map}) is equivalent to the $l_0$ regularized least squares objective, for some $\lambda > 0$,

\begin{equation}
\label{obj:l0}
\min\limits_{\beta}
\frac12\left\|y - X\beta\right\|_2^2
+ \lambda
\left\|\beta\right\|_0 \ .
\end{equation}

\end{proposition}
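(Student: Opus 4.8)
The plan is to start from the MAP objective (\ref{obj:map}) and reduce it to (\ref{obj:l0}) through two simplifications licensed by the stated hypotheses, together with the change of variables $\beta_i = \gamma_i\alpha_i$ already introduced in (\ref{eq:bg}). The whole argument is a bookkeeping exercise in matching the $\{\gamma,\alpha\}$ parameterization to a single coefficient vector $\beta$, plus one limiting argument to discard the ridge penalty.

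First I would deal with the ridge term. Under the signal-to-noise assumption $\sigma_\beta^2 \gg \sigma_e^2$, the multiplier $\sigma_e^2/\sigma_\beta^2$ of $\left\|\alpha\right\|_2^2$ tends to zero, so in the regime $\sigma_\beta^2/\sigma_e^2 \to \infty$ this penalty is negligible relative to the remaining two terms and may be dropped. This is exactly the ``sufficiently large slab variance'' convention of item (2) above, under which the prior imposes essentially no shrinkage on the nonzero coefficients. With the ridge term removed, the objective depends on $(\gamma,\alpha)$ only through $X_\gamma\alpha_\gamma$ and $\left\|\gamma\right\|_0$.

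Second I would pass from $(\gamma,\alpha)$ to $\beta$ via $\beta_i = \gamma_i\alpha_i$, which gives $X_\gamma\alpha_\gamma = X\beta$ immediately. The one point requiring care is the identity $\left\|\gamma\right\|_0 = \left\|\beta\right\|_0$, since a priori only $\left\|\beta\right\|_0 \le \left\|\gamma\right\|_0$ holds, with possible strict inequality when some coordinate has $\gamma_i = 1$ but $\alpha_i = 0$. Here the hypothesis $\theta < \tfrac12$ is essential: it makes $\log\!\left((1-\theta)/\theta\right) > 0$, so the coefficient of $\left\|\gamma\right\|_0$ is strictly positive. Consequently any candidate with $\gamma_i = 1$ and $\alpha_i = 0$ can be strictly improved by resetting $\gamma_i = 0$, which leaves $X_\gamma\alpha_\gamma$ unchanged while decreasing the penalty; hence at every minimizer $\gamma_i = 1 \iff \beta_i \neq 0$, and $\left\|\gamma\right\|_0 = \left\|\beta\right\|_0$.

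Combining the two steps, the minimization collapses to $\min_\beta \left\|y - X\beta\right\|_2^2 + 2\sigma_e^2\log\!\left((1-\theta)/\theta\right)\left\|\beta\right\|_0$. Dividing the objective by $2$, which does not alter the argmin, and setting $\lambda = \sigma_e^2\log\!\left((1-\theta)/\theta\right) > 0$ yields exactly (\ref{obj:l0}), establishing the claimed equivalence. I expect the main subtlety to lie not in the algebra but in the first step, namely making precise the sense in which the ridge term is ``dropped.'' The cleanest route is to read the equivalence as the limit $\sigma_\beta^2/\sigma_e^2 \to \infty$, so that the reduction is exact rather than merely approximate; absent that limit the two problems agree only up to the vanishing ridge contribution, and one should state the proposition as a limiting identity rather than an exact one for finite $\sigma_\beta^2$.
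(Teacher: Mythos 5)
Your proposal is correct and follows essentially the same route as the paper's proof: drop the ridge term under the large-SNR assumption, then use the strict positivity of the penalty coefficient (from $\theta < \tfrac12$) to rule out minimizers with $\gamma_i = 1$ but $\alpha_i = 0$, so that the change of variables $\beta_i = \gamma_i\alpha_i$ identifies the two problems; the paper packages this as a two-directional feasibility correspondence, but the content is identical. Your closing remark that the equivalence is cleanest when read as the limit $\sigma_\beta^2/\sigma_e^2 \to \infty$ is a fair sharpening of a point the paper leaves informal, but it does not change the argument.
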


\begin{proof}  First, assuming that
$$
\theta < \frac12, \ \ \  \sigma_\beta^2 \gg \sigma_e^2, \ \ \  \frac{\sigma_e^2}{\sigma_\beta^2}\left\|\alpha\right\|_2^2 \to 0 \ ,
$$
gives us an objective function of the form
\begin{equation}
\min\limits_{\gamma, \alpha}
\label{obj:vs}
\frac12 \left\|y - X_\gamma \alpha_\gamma\right\|_2^2
+ \lambda
\left\|\gamma\right\|_0,  \ \ \ \  \text{where } \lambda \defeq \sigma_e^2\log\left(\left(1-\theta\right) / \theta\right) > 0 \ .
\end{equation}

Equation (\ref{obj:vs}) can be seen as a variable selection version of equation (\ref{obj:l0}).  The interesting fact is that (\ref{obj:l0}) and (\ref{obj:vs}) are equivalent.  To show this, we need only to check that the optimal solution to (\ref{obj:l0}) corresponds to a feasible solution to (\ref{obj:vs}) and vice versa.  This is explained as follows.

On the one hand, assuming $\hat\beta$ is an optimal solution to (\ref{obj:l0}), then we can correspondingly define $\hat\gamma_i \defeq I\left\{\hat\beta_i \neq 0\right\}$, $\hat\alpha_i \defeq \hat\beta_i$, such that $\left\{\hat\gamma, \hat\alpha\right\}$ is feasible to (\ref{obj:vs}) and gives the same objective value as $\hat\beta$ gives (\ref{obj:l0}).

On the other hand, assuming $\left\{\hat\gamma, \hat\alpha\right\}$ is optimal to (\ref{obj:vs}), implies that we must have all of the elements in $\hat\alpha_\gamma$ should be non-zero, otherwise a new $\tilde\gamma_i \defeq I\left\{\hat\alpha_i \neq 0\right\}$ will give a lower objective value of (\ref{obj:vs}).  As a result, if we define $\hat\beta_i \defeq \hat\gamma_i\hat\alpha_i$, $\hat\beta$ will be feasible to (\ref{obj:l0}) and gives the same objective value as $\left\{\hat\gamma, \hat\alpha\right\}$ gives (\ref{obj:vs}).

Combining both arguments shows that the two problems (\ref{obj:l0}) and (\ref{obj:vs}) are equivalent.  Hence we can use results from non-convex optimization literature to find Bayes MAP estimators.
\end{proof}

\section{Bayesian regularization and Proximal Updating \label{phi}}

Section \ref{s&s} provides a connection between spike-and-slab and $l_0$ regularization, in the sense that $l_0$ regularization can be viewed as the MAP estimator of Bayesian spike-and-slab regression.  From a Bayesian perspective, the posterior mean is preferred to the MAP estimator due to its superior risk minimization properties under mean squared error.  \cite{starck2013} also discusses the inadequacy of interpreting sparse regularization as Bayesian MAP estimator.  We now show that the posterior mean estimation is also connected to regularization and introduce the connection with the help of proximal operators and Tweedie's formula \citep{efron2011}.

\cite{gribonval2011} considers the relationship between the MAP estimator and the posterior mean. \cite{polson2016} considers the relationship between posterior modes and envelopes. These approaches try to uncover the implicit prior that maps a posterior mean to a mode. As the posterior mean is designed to minimize the mean squared error and Bayes risk, this is the appropriate calculation.

\subsection{Posterior mean optimality}

Consider a normal mean problem in Bayesian setting,

\begin{equation}
\label{normmean}
\begin{array}{rcl}
y | \beta & \sim & N(\beta, \sigma_e^2) \ ;\\
\beta & \sim & p\left(\beta\right) \ .
\end{array}
\end{equation}
Then the optimal estimator of $\beta$ with respect to the quadratic loss is the posterior mean.  To calculate this posterior mean, $\hat\beta$, \cite{efron2011} introduced Tweedie's formula which, for the normal mean problem, gives

\begin{equation}
\label{eq:tweedie}
\hat{\beta} = E\left[ \beta \mid y \right] = y + \sigma_e^2\frac{d}{dy} \log m(y) \; ,
\end{equation}
where the marginal density of $y$ is
$$
m(y) = \int f\left( y \mid \beta\right) p( \beta ) d \beta \ .
$$
Here $f\left(y\mid\beta\right) = \frac{1}{\sqrt{2\pi}\sigma_e}\exp\left\{-\frac{(y - \beta)^2}{2\sigma_e^2}\right\}$ is the probability density function (pdf) of $N(\beta, \sigma_e^2)$.  The interpretation of the Bayesian correction term,
$$
\sigma_e^2\frac{d}{dy} \log m(y) \ ,
$$
is to ``regularize" the unbiased maximum likelihood estimator $y$, which provides the optimal bias-variance trade-off for prediction, see \cite{pericchi1992} for further discussion.  When both $m$ and $\sigma_e^2$ are unknown, \cite{donoho2013} proposes a procedure to estimate each term, which achieves the optimal Bayes risk, resulting in the plug-in estimator
$$
\hat\beta = y + \hat\sigma_e^2\frac{d}{dy} \log \hat m(y) \ .
$$

Another useful result applies Stein's risk function to Tweedie's formula.  Then we can derive this optimal Bayes risk \citep{robbins1956},
$$
R(\hat\beta) = \sigma_e^2\left(1 - \sigma_e^2 I\left(m\right)\right) \; ,
$$
where $I(m) = E_y\left[\left(\frac{d}{dy}\log m(y)\right)^2\right]$ is the Fisher Information for $m$.  This risk is optimal given the use of the posterior mean estimator.

Tweedie's formula can be generalized to Gaussian linear regression in the following theorem.

\begin{thm} Suppose in the Gaussian linear regression model,
$$
y = X\beta + e, \ \ \ \text{where } e\sim N\left(0, \Sigma\right) \ .
$$
Let $p\left(\beta\right)$ denote the prior density of $\beta$, and $m\left(y\right) = \int_\beta p\left(y \mid \beta\right)p\left(\beta\right)d\beta$ the marginal (prior predictive) density of $y$.  Then, assuming $\left(X^T\Sigma^{-1}X\right)^{-1}$ exists, the posterior mean of $\beta$ given $y$ is
\begin{equation}
\label{eq:lspmbeta}
E\left[\beta\mid y\right]
=
\left(X^T\Sigma^{-1}X\right)^{-1}X^T\left(\Sigma^{-1}y + \nabla_y\log m\left(y\right)\right) \ .
\end{equation}
\end{thm}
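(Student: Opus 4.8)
The plan is to mimic the scalar Tweedie argument behind (\ref{eq:tweedie}), but now differentiating the marginal density $m(y)$ under the integral sign with respect to the whole vector $y \in \mathbb{R}^n$, and then reading off the posterior mean from the resulting gradient identity. Throughout I treat $\Sigma$, $X$, and $p(\beta)$ as fixed and work directly with the Gaussian likelihood
$$
p(y\mid\beta) = (2\pi)^{-n/2}|\Sigma|^{-1/2}\exp\left\{-\tfrac12 (y - X\beta)^T \Sigma^{-1}(y - X\beta)\right\}.
$$

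First I would record the one elementary computation that drives everything: the score of the likelihood in $y$ is linear in $\beta$, namely $\nabla_y p(y\mid\beta) = \Sigma^{-1}(X\beta - y)\,p(y\mid\beta)$. Differentiating $m(y) = \int p(y\mid\beta)\,p(\beta)\,d\beta$ under the integral then gives $\nabla_y m(y) = \Sigma^{-1}\bigl[X\!\int \beta\,p(y\mid\beta)p(\beta)\,d\beta - y\,m(y)\bigr]$. Dividing by $m(y)$ and using $p(y\mid\beta)p(\beta)/m(y) = p(\beta\mid y)$ converts the remaining integral into a posterior expectation, so that
$$
\nabla_y \log m(y) = \Sigma^{-1}\left(X\,E[\beta\mid y] - y\right).
$$
From here the result is pure linear algebra: rearranging yields $\Sigma^{-1}X\,E[\beta\mid y] = \Sigma^{-1}y + \nabla_y\log m(y)$, and left-multiplying by $X^T$ followed by inverting $X^T\Sigma^{-1}X$ (which exists by hypothesis) delivers exactly (\ref{eq:lspmbeta}).

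The main obstacle is conceptual rather than computational. The gradient identity only determines the fitted vector $X\,E[\beta\mid y]\in\mathbb{R}^n$, not $E[\beta\mid y]\in\mathbb{R}^p$ itself, because the map $\beta \mapsto X\beta$ is not injective in general. Recovering the posterior mean therefore requires $X$ to have full column rank, which is precisely the assumption that $(X^T\Sigma^{-1}X)^{-1}$ exists; this is also why the formula features the generalized-least-squares operator $(X^T\Sigma^{-1}X)^{-1}X^T$ in place of a naive inverse. A secondary technical point is justifying the interchange of $\nabla_y$ and $\int$, which follows from dominated convergence under mild integrability/tail conditions on $p(\beta)$ ensuring that $m$ is finite and continuously differentiable; I would state these conditions but not belabor them. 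As a sanity check one recovers the normal-mean formula (\ref{eq:tweedie}) by setting $X = I_n$ and $\Sigma = \sigma_e^2 I_n$, confirming the theorem is the expected multivariate generalization.
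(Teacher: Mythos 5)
Your proof is correct and follows essentially the same route as the paper's: both rest on the identity $\nabla_y N\left(y; X\beta, \Sigma\right) = -\Sigma^{-1}\left(y - X\beta\right)N\left(y; X\beta, \Sigma\right)$, differentiation under the integral defining $m(y)$, recognition of the posterior expectation, and left-multiplication by $X^T$ with the full-rank assumption to solve for $E\left[\beta \mid y\right]$. Your explicit remarks on the non-injectivity of $\beta \mapsto X\beta$ and on justifying the interchange of gradient and integral are sensible additions, but they do not change the argument's substance.
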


\begin{proof}
Let $N\left(y; X\beta, \Sigma\right)$ denote the multivariate normal density of $y |\beta\sim N\left(X\beta, \Sigma\right)$, then the posterior density of $\beta$ given $y$,
$$
p\left(\beta \mid y\right) = \displaystyle\frac{p\left(y\mid \beta\right)p\left(\beta\right)}{m\left(y\right)} = \frac{1}{m\left(y\right)}N\left(y; X\beta, \Sigma\right)p\left(\beta\right) \ .
$$
Therefore, the posterior mean of the quantity $\Sigma^{-1}\left(y - X\beta\right)$,
\begin{equation}
\label{eq:lspm}
\begin{array}{rcl}
E\left[\Sigma^{-1}\left(y - X\beta\right)\mid y\right]
&=&\int_\beta
\Sigma^{-1}\left(y - X\beta\right)
p\left(\beta\mid y\right)d\beta\\
&=&
\frac{1}{m\left(y\right)}
\int_\beta
\Sigma^{-1}\left(y - X\beta\right)
N\left(y; X\beta, \Sigma\right)p\left(\beta\right)
d\beta \ .
\end{array}
\end{equation}

Note that by the property of the multivariate normal density,
$$
\Sigma^{-1}\left(y - X\beta\right)
N\left(y; X\beta, \Sigma\right)
=
-\nabla_y N\left(y; X\beta, \Sigma\right) \ ,
$$
and so (\ref{eq:lspm}) becomes
$$
E\left[\Sigma^{-1}\left(y - X\beta\right)\mid y\right]
=
\frac{1}{m\left(y\right)}
\int_\beta
-\nabla_y N\left(y; X\beta, \Sigma\right)
p\left(\beta\right)
d\beta 
=
-\nabla_y \log m\left(y\right) \ .
$$

Multiplying both sides by $X^T$ and assuming $\left(X^T\Sigma^{-1}X\right)^{-1}$ exists, the posterior mean of $\beta$ given $y$ becomes
\vspace{0.5pc}

\hfill
$\displaystyle E\left[\beta\mid y\right] = \left(X^T\Sigma^{-1}X\right)^{-1}X^T\left(\Sigma^{-1}y + \nabla_y\log m\left(y\right)\right) \ .$
\hfill
\end{proof}

It's easy to see that, similar to Tweedie's formula for the normal means problem, the posterior mean in the Gaussian linear regression (\ref{eq:lspmbeta}) consists of two parts.  One is the usual weighted least squares solution, and the other is a Bayesian correction by the gradient of the prior predictive score, $\nabla_y \log m\left(y\right)$.  \cite{griffin2010} gives the equivalent result in the form when the least squares estimator $\hat\beta$ instead of $y$ is conditioned on. \cite{masreliez1975} discusses the posterior mean under Gaussian prior but non-Gaussian likelihood. \citep{pericchi1992}

\subsection{Regularized linear regression}

Proximal operators and Tweedie's formula also provide a way to connect the Bayesian posterior mean and the regularized least squares.  Specifically, we want to find a $\phi$, such that the Bayesian posterior estimator $\hat{\beta} = E\left[ \beta \mid y \right]$ with the prior $p$ is the same as the solution to the $\phi$-regularized least squares.  That is,

\begin{equation}
\label{obj:pmreg}
\hat{\beta} = E\left[ \beta \mid y \right] = \arg\min_\beta \left\{\frac1{2\sigma_e^2}(y -\beta)^2 + \phi(\beta)\right\}.
\end{equation}

We now use the theory of proximal mappings to re-write this estimator.  First, here are some definitions.  The Moreau envelope $E_{\gamma f} (x)$ and proximal mapping $\prox_{\gamma f} (x)$ of a convex function are defined as
\begin{equation}
\label{def:prox}
\begin{array}{rcl}
E_{\gamma f} (x) &=& \inf_{z } \left\{f(z) + \frac{1}{2\gamma} \enorm{z - x}^2  \right\}  \leq f(x) \ ;\\
\prox_{\gamma f} (x) &=& \arg \min_{z } \left\{  f(z)+ \frac{1}{2\gamma} \enorm{z - x}^2  \right\} \, .
\end{array}
\end{equation}
The Moreau envelope is a regularized version of $f$ and approximates $f$ from below, and has the same set of minimizing values as $f$.  
The proximal mapping returns the value that solves the minimization problem defined by the Moreau envelope.  It balances two goals: minimizing $f$, and staying near $x$. 

Now, observe that if $\hat{z}(x) = \prox_{\gamma f}(x)$ is the value that achieves the minimum,
$$
\nabla \left\{f(\hat z) + \frac{1}{2\gamma} \enorm{\hat z - x}^2  \right\}
=\nabla f(\hat z) + \frac1\gamma(\hat z - x) = 0 \, ,
$$
which leads to $\hat z = x - \gamma\nabla f(\hat z) \, .$ By construction of the envelope,
$$
\nabla E_{\gamma f}(x) = \nabla  \inf_{z } \left\{f(z) + \frac{1}{2\gamma} \enorm{z - x}^2  \right\} = \frac{1}{\gamma}[x - \hat{z}(x)] \, ,
$$
This leads to the fundamental proximal relation $\hat z = x - \gamma \nabla E_{\gamma f}(x) \, .$
Therefore, write
\begin{equation}
\label{eq:prox}
\prox_{\gamma f}(x) = x - \gamma\nabla f\left[\prox_{\gamma f}(x)\right] = x - \gamma \nabla E_{\gamma f}(x) \, .
\end{equation}

Meanwhile, the definition of proximal mapping (\ref{def:prox}) and its property (\ref{eq:prox}) give us

$$
\hat{\beta} = \arg\min_\beta \left\{\frac1{2\sigma_e^2}(y -\beta)^2 + \phi(\beta)\right\} = \prox_{\sigma_e^2\phi}\left(y\right) = y - \sigma_e^2\nabla \phi\left(\hat\beta\right) \ .
$$

Combining with Tweedie's formula (\ref{eq:tweedie}), gives

\begin{equation}
\label{eq:pmreg}
\nabla \phi\left(\hat\beta\right) = - \frac{d}{dy} \log m(y) \ .
\end{equation}

Hence, if we want to match a regularized least squares with a posterior mean, we can ``solve'' for the penalty $\phi$, given a marginal distribution $m(y)$, via the equation for the proximal mapping (\ref{eq:pmreg}).  If $ \phi $ is non-differentiable at a point, we replace $ \nabla $ by $ \partial $.

\cite{gribonval2011} provides the following answer.  Given any $z$, find $\hat y$ such that $E\left[ \beta \mid\hat y\right] = z$. Then the penalty
\begin{equation}
\label{eq:phi}
\phi(z) = -\frac1{2\sigma_e^2}(\hat y - z)^2 - \log m\left( \hat y\right) + c \; ,
\end{equation}
with the constant $c$ to ensure that $\phi(0) = 0$. To see why this construction makes sense, simply take derivatives with respect to $\hat y$ on both sides, and get
$$
\begin{array}{rcl}
- \frac{d}{d\hat y} \log m(\hat y) & = & \nabla\phi(z)\frac{dz}{d\hat y} + \frac1{\sigma_e^2}\left(\hat y - z\right)\left(1 - \frac{dz}{d\hat y}\right) \\
\left[\text{ by (\ref{eq:tweedie}) }\right] & = & \nabla\phi(z)\frac{dz}{d\hat y} + \frac1{\sigma_e^2}\left(-\sigma_e^2\frac{d}{d\hat y} \log m(\hat y)\right)\left(1 - \frac{dz}{d\hat y}\right) \\
& = & - \frac{d}{d\hat y} \log m(\hat y) + \left(\nabla\phi(z) + \frac{d}{d\hat y} \log m(\hat y)\right)\frac{dz}{d\hat y} \\
\left[\text{ by (\ref{eq:pmreg}) }\right]& = & - \frac{d}{d\hat y} \log m(\hat y) \ .
\end{array}
$$

To summarize, the solution to the regularized least squares problem
$$
\min_\beta\frac1{2\sigma_e^2}\left(y - \beta\right)^2 + \phi\left(\beta\right)
$$
is the posterior mode with the prior $p\left(\beta\right) \propto \exp\left(-\phi\left(\beta\right)\right)$.  Our proximal operators and Tweedie's formula discussion shows that the regularized least squares solution can also be viewed as the posterior mean under an implied prior $p\left(\beta\right)$, see \cite{strawderman2013}.

To illustrate our result, When $p$ is sparsity-inducing, such as the spike-and-slab, we can construct the associated penalty $\phi$, which is typically non-convex for both Gaussian and Laplace cases.

\subsection{Example: Spike-and-slab Gaussian \& Laplace prior}

For the normal mean problem (\ref{normmean}), assuming $p\left(\beta\right)$ is the aforementioned spike-and-slab (Bernoulli Gaussian) prior (\ref{eqn:ss}), the marginal distribution of $y$ is a mixture of two mean zero normals,
$$
\left.y\mid\theta\right. \sim \left(1 - \theta\right) N\left(0, \sigma_e^2\right) + \theta N\left(0, \sigma_e^2 + \sigma_\beta^2\right) \ .
$$
The posterior mean $E\left[\beta | y\right]$ is given by
$$
\hat\beta^{BG} = w(y) y \ , \; \; {\rm where} \; \; w(y) =   \frac{\sigma_\beta^2 }{\sigma_e^2 + \sigma_\beta^2} 
\left(1 + 
\frac{
\left(1-\theta\right) \frac{1}{\sqrt{2\pi}\sigma_e}\exp\left\{-\frac{y^2}{2\sigma_e^2}\right\}
}
{
\theta \frac{1}{\sqrt{2\pi}\sigma_e}\exp\left\{-\frac{y^2}{2\sigma_e^2 + \sigma_\beta^2}\right\}
}
\right)^{-1} \ .
$$
Thus, $\forall z\in \mathbb{R}$, we can find $\hat y$ such that $w\left(\hat y\right)\hat y = z$; then the penalty $\phi^{BG}$ associated with the Bayesian posterior mean with the Bernoulli-Gaussian prior can be obtained by (\ref{eq:phi}).  $\phi^{BG}$ doesn't have an analytical form, but can be computed numerically.

\cite{amini2012} argued that Bernoulli-Gaussian priors are usually not applicable to real-world signals, and proposed Bernoulli-Laplace priors which are infinitely divisible and more appropriate for sparse signal processing.  The Bernoulli-Laplace priors are very similar to the Bernoulli-Gaussian ones, their only difference being that the ``slab" parts are replaced by Laplace distributions.  Therefore, the prior

\begin{equation}
\label{prior:bl}
p\left(\beta\mid\sigma_\beta\right) = \left(1-\theta\right)\delta_0 + \theta \frac{1}{\sqrt{2}\sigma_\beta}\exp\left(-\frac{\sqrt{2}}{\sigma_\beta}\left|\beta\right|\right) \ .
\end{equation}

\cite{mitchell1994} and \cite{hans2009} studied the marginal and posterior distribution with the Laplace prior.  With their results, the marginal density of $y$ with the Bernoulli-Laplace prior (\ref{prior:bl}) is given by
$$
m\left(y\right) = \left(1-\theta\right) \frac{1}{\sqrt{2\pi}\sigma_e}\exp\left\{-\frac{y^2}{2\sigma_e^2}\right\} + \theta
\frac{1}{\sqrt{2}\sigma_\beta}
\exp\left\{\frac{\sigma_e^2}{\sigma_\beta^2}\right\}
\left(F_{\sigma_\beta}\left(y\right) + 
F_{\sigma_\beta}\left(-y\right)\right) \ ,
$$
where
$$
F_{\sigma_\beta}\left(y\right) = 
\exp\left\{\frac{\sqrt{2}y} {\sigma_\beta}\right\}
\Phi\left(
-\frac{y}{\sigma_e}
-\frac{\sqrt{2}\sigma_e}{\sigma_\beta}
\right) \ .
$$
Here $\Phi$ is the cumulative distribution function (cdf) of the standard normal.  The posterior mean $E\left[\beta | y\right]$ is then given by

$$
\hat\beta^{BL} = 
\left(y - \left[\frac{F_{\sigma_\beta}(-y) - F_{\sigma_\beta}(y)}{F_{\sigma_\beta}(-y) + F_{\sigma_\beta}(y)}\right]\frac{\sqrt{2}\sigma_e^2}{\sigma_\beta} \right)
\left(1 + 
\frac{
\left(1-\theta\right) 
\frac{1}{\sqrt{2\pi}\sigma_e}\exp\left\{-\frac{y^2}{2\sigma_e^2}\right\}
}
{
\theta
\frac{1}{\sqrt{2}\sigma_\beta}
\exp\left\{\frac{\sigma_e^2}{\sigma_\beta^2}\right\}
\left(F_{\sigma_\beta}(y) + F_{\sigma_\beta}(-y)\right)
}
\right)^{-1}
$$
Similarly, we are also able to find the penalty $\phi^{BL}$ associated with this Bernoulli-Laplace prior numerically by (\ref{eq:phi}). It's worth noting that this prior is a special case of the spike-and-slab Lasso prior proposed by \cite{rockova2016}. In that paper the authors use a mixture of two Laplace distributions, one of which is very close to $\delta_0$ as its variance goes to zero. Both priors are capable of striking a balance between hard-thresholding and soft-thresholding.

For comparison, $\hat\beta^{BG}$ and $\hat\beta^{BL}$ are plotted in Figure \ref{fig:pm}; $\phi^{BG}$ and $\phi^{BL}$ are plotted in Figure \ref{fig:phi}.  Both priors shrink small observations towards zero.  For large observations, when $\sigma_\beta$ is small, Bernoulli-Gaussian, like ridge regression, unnecessarily penalizes large observations too much, whereas Bernoulli-Laplace is more like Lasso.  As $\sigma_\beta$ gets larger, both priors get closer to hard-thresholding, and their associated penalties $\phi$ closer to SCAD-like non-convex penalties \citep{fan2001}.

\begin{figure}[!htb]
\centering
   \begin{minipage}{0.49\textwidth}
     \includegraphics[width=\linewidth]{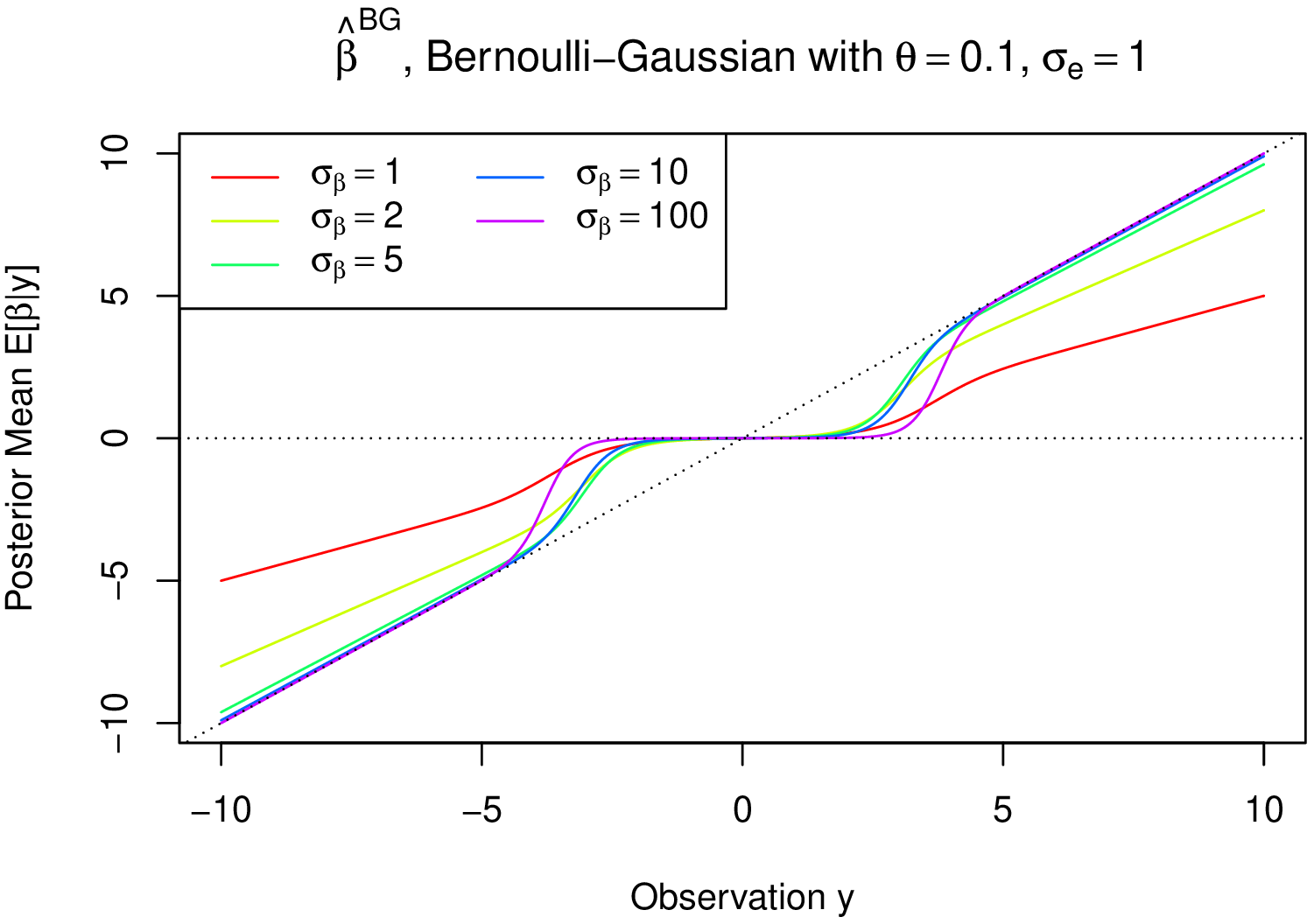}
   \end{minipage}
   \begin {minipage}{0.49\textwidth}
     \includegraphics[width=\linewidth]{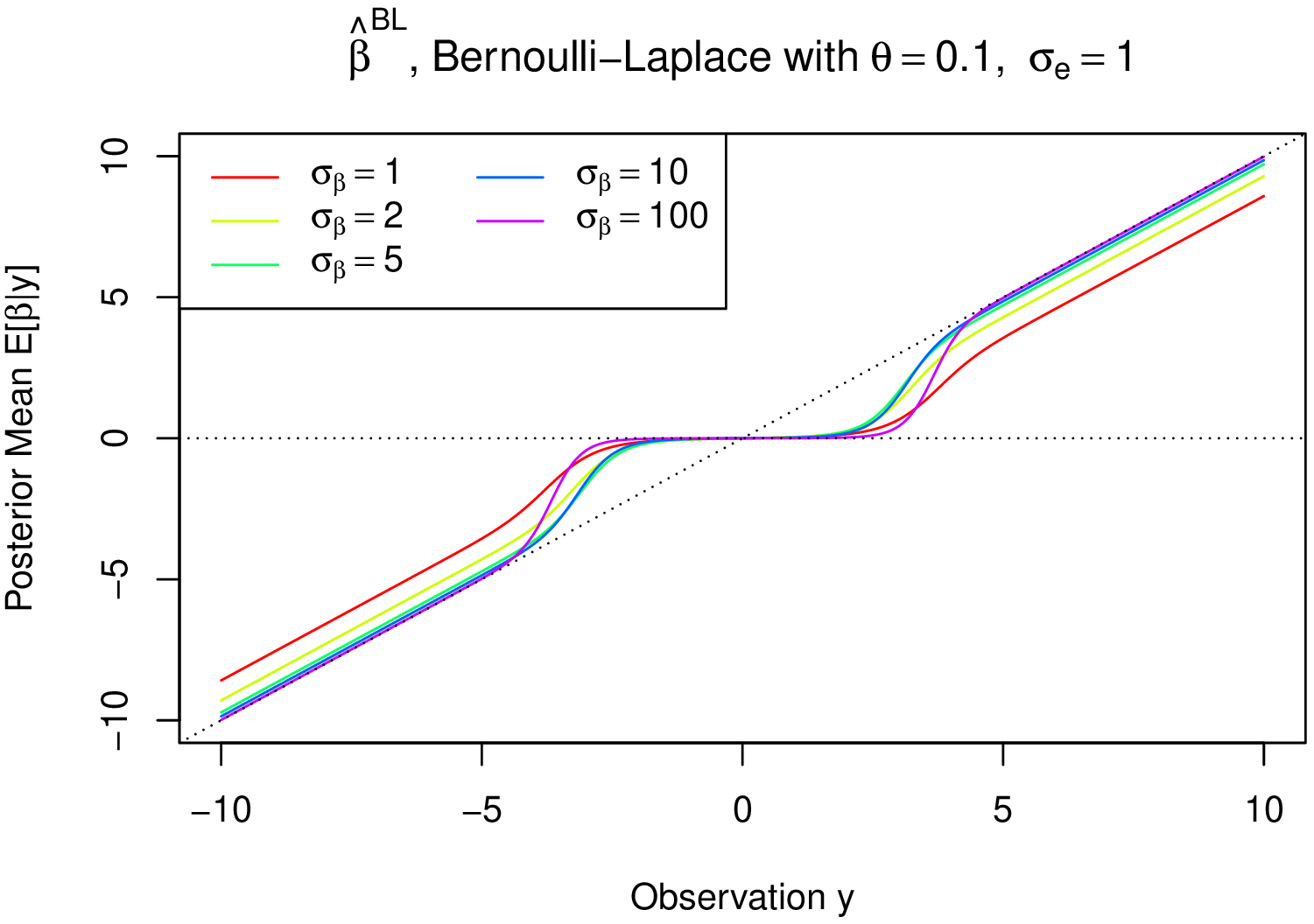}
   \end{minipage}
  \caption{Posterior mean $\hat\beta = E\left[\beta \mid y\right]$ with Bernoulli-Gaussian (left) and Bernoulli-Laplace (right) priors.  \textit{Both priors shrink small observations towards zero.  When $\sigma_\beta$ is small, Bernoulli-Gaussian priors shrink large observations more heavily than Bernoulli-Laplace priors which are more like soft-thresholding.  As $\sigma_\beta$ gets larger, both get closer to hard-thresholding.}}
\label{fig:pm}
\end{figure}

\begin{figure}[!htb]
\centering
   \begin{minipage}{0.49\textwidth}
     \includegraphics[width=\linewidth]{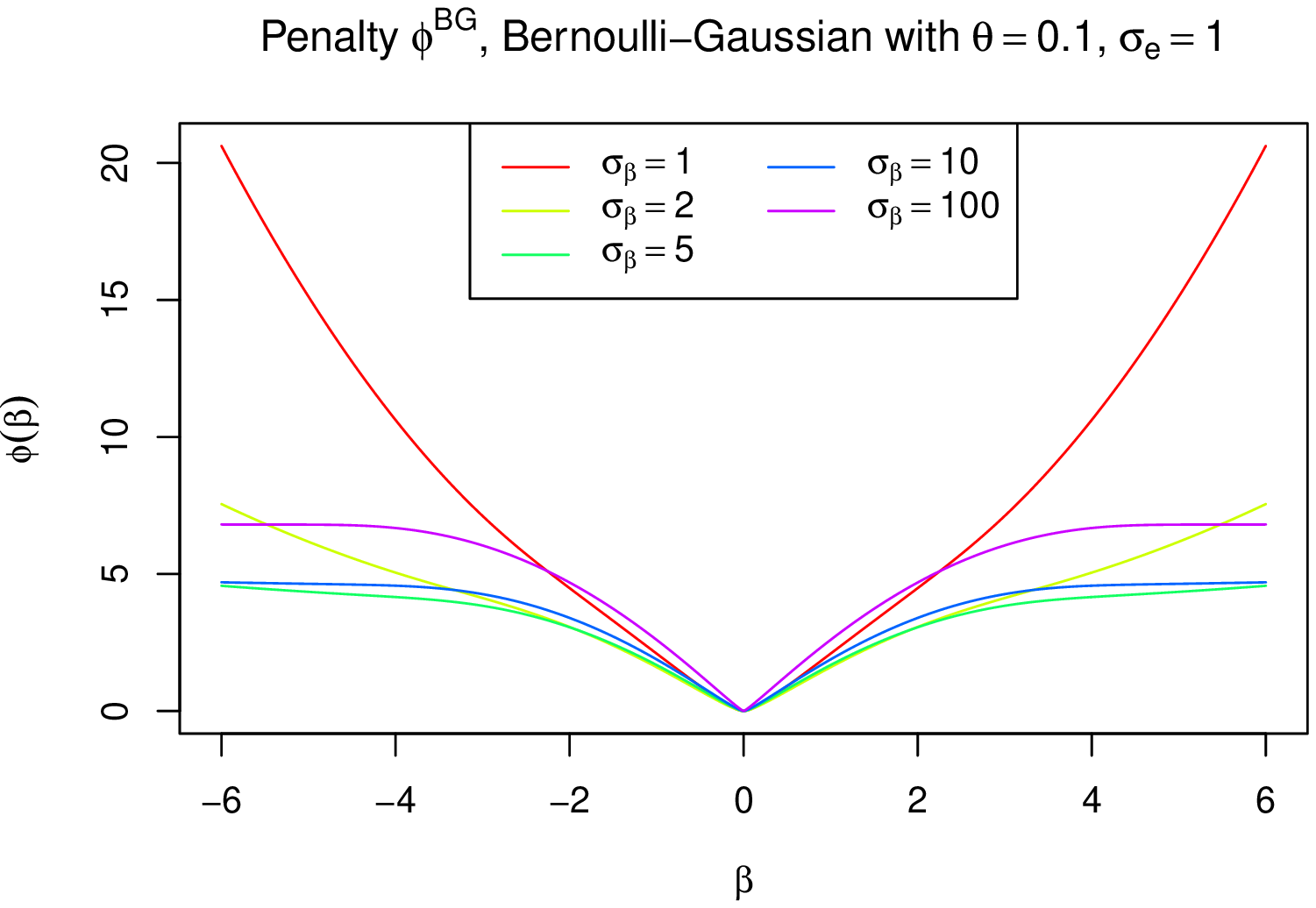}
   \end{minipage}
   \begin {minipage}{0.49\textwidth}
     \includegraphics[width=\linewidth]{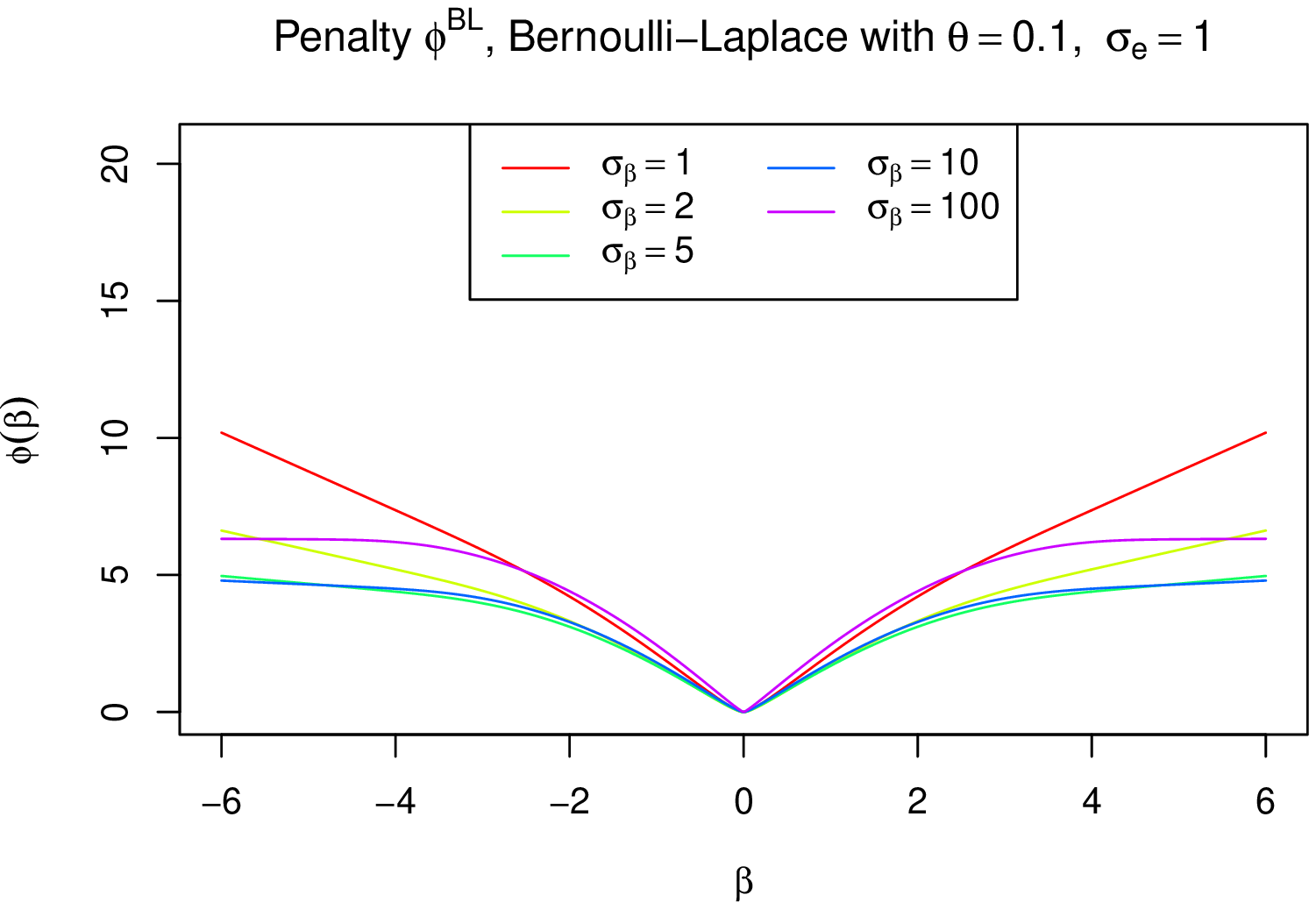}
   \end{minipage}
  \caption{Penalty $\phi$ associated with the posterior mean of Bernoulli-Gaussian (left) and Bernoulli-Laplace (right) priors.  \textit{Both $\phi^{BG}$ and $\phi^{BL}$ look ``spiky" around zero, seemingly to induce sparsity for small observations, although they are actually differentiable everywhere.  When $\sigma_\beta$ is small, the penalties associated with Bernoulli-Gaussian priors behave like ridge regression for large observations, whereas those associated with Bernoulli-Laplace priors appear to have a Lasso flavor.  As $\sigma_\beta$ gets larger, both get closer to non-convex penalties such as SCAD.}}
\label{fig:phi}
\end{figure}

\section{Computing the $l_0$-regularized regression solution \label{survey}}

We now turn to the problem of computation.  $l_0$-regularized least squares (\ref{obj:l0}) is closely related to the best subset selection in linear regression as follows.

\begin{equation}
  \label{obj:subset}
  \begin{array}{rl}
\min\limits_{\beta} & \frac12\|y - X\beta\|_2^2\\
\text{s.t.} & \|\beta\|_0 \leq k \ .
  \end{array}
\end{equation}
The $l_0$-regularized least squares (\ref{obj:l0}) can be seen as (\ref{obj:subset})'s Lagrangian form.  However, due to high non-convexity of the $l_0$-norm, (\ref{obj:l0}) and (\ref{obj:subset}) are connected but not equivalent.  In particular, for any given $\lambda \geq 0$, there exists an integer $k \geq 0$, such that (\ref{obj:l0}) and (\ref{obj:subset}) have the same global minimizer $\hat\beta$.  However, it's not true the other way around.  It's possible, even common, that for a given $k$, we cannot find a $\lambda \geq 0$, such that the solutions to (\ref{obj:subset}) and (\ref{obj:l0}) are the same.

Indeed, for $k \in \left\{1, 2, \ldots, p\right\}$, let $\hat\beta_k$ be respective optimal solutions to (\ref{obj:subset}) and $f_k$ respective optimal objective values, and so $f_1 \geq f_2 
\geq \cdots \geq f_p$.  If we want a solution $\hat\beta_\lambda$ to (\ref{obj:l0}) has $\left\|\hat\beta_\lambda\right\|_0 = k$, we need to find a $\lambda$ such that

$$
\max\limits_{i > k}\left\{f_k - f_i\right\} \leq \lambda \leq \min\limits_{j < k}\left\{f_j - f_k\right\} \ ,
$$
with the caveat that such $\lambda$ needs not exist.

Both problems involve discrete optimization and have thus been seen as intractable for large-scale data sets.  As a result, in the past, $l_0$ norm is usually replaced by its convex relaxation $l_1$ norm to facilitate computation.  However, it's widely known that the solutions of $l_0$ norm problems provide superior variable selection and prediction performance compared with their $l_1$ convex relaxation such as Lasso.  \cite{zhang2014lower} studies the statistical properties of the theoretical solution to (\ref{obj:l0}), and points out that the solution to the $l_0$-regularized least squares should be better than Lasso in terms of variable selection especially when we have a design matrix $X$ that has high collinearity among its columns.

\cite{bertsimas2016} introduced a first-order algorithm to provide a stationary solution $\beta^*$ to a class of generalized $l_0$-constrained optimization problem, with convex $g$,
\begin{equation}
  \label{eq:gen}
  \begin{array}{rl}
\min\limits_{\beta} & g(\beta)\\
\text{s.t.} & \|\beta\|_0 \leq k \ .
  \end{array}
\end{equation}
Let $L$ be the Lipschitz constant for $\nabla g$ such that $\forall\beta_1, \beta_2$, $\|\nabla g(\beta_1) - \nabla g(\beta_2)\| \leq L \|\beta_1  - \beta_2\|$.  Their ``Algorithm 1" is as follows.

\begin{enumerate}
\item Initialize $\beta^0$ such that $\left\|\beta^0\right\|_0 \leq k$.
\item For $t \geq 0$, obtain $\beta^{t + 1}$ as
\begin{equation}
\label{subset:algo1}
\beta^{t + 1} = H_k\left(\beta^t - \frac1L\nabla g\left(\beta^t\right)\right) \ ,
\end{equation}
until convergence to $\beta^*$.
\end{enumerate}
where the operator $H_k\left(\cdot\right)$ is to keep the largest $k$ elements of a vector as the same, whilst to set all else to zero.  It can also be called the hard thresholding at the $k^\text{th}$ largest element.  In the least squares setting when $g(\beta) = \frac12\|y - X\beta\|_2^2$, $\nabla g$ and $L$ are easy to compute.  \cite{bertsimas2016} then uses the stationary solution $\beta^*$ obtained by the aforementioned algorithm (\ref{subset:algo1}) as a warm start for their mixed integer optimization (MIO) scheme to produce a ``provably optimal solution" to the best subset selection problem (\ref{obj:subset}).

It's worth pointing out that the key iteration step (\ref{subset:algo1}) is connected to the proximal gradient descent (PGD) algorithm many have used to solve the $l_0$-regularized least squares (\ref{obj:l0}), as well as other non-convex regularization problems.  PGD methods solve a general class of problems such as
\begin{equation}
  \label{obj:pgd}
  \begin{array}{rl}
\min\limits_{\beta} & g(\beta) + \lambda\phi(\beta) \ ,
  \end{array}
\end{equation}
where $g$ is the same as in (\ref{eq:gen}), and $\phi$, usually non-convex, is a regularization term.  In this framework, in order to obtain a stationary solution $\beta^*$, the key iteration step is
\begin{equation}
  \label{pgd:algo}
\beta^{t + 1} = \prox_{\lambda\phi}\left(\beta^t - \frac1L\nabla g\left(\beta^t\right)\right) \ ,
\end{equation}
where $\beta^t - \frac1L\nabla g(\beta^t)$ can be seen as a gradient descent step for $g$ and $\prox_{\lambda\phi}$ is the proximal operator for $\lambda\phi$.  In $l_0$-regularized least squares, $\lambda\phi\left(\cdot\right) = \lambda\left\|\cdot\right\|_0$, and its proximal operator $\prox_{\lambda\|\cdot\|_0}$ is just the hard thresholding at $\lambda$.  That is, $\prox_{\lambda\|\cdot\|_0}$ is to keep the same all elements no less than $\lambda$, whilst to set all else to zero.  As a result, the similarity between (\ref{subset:algo1}) and (\ref{pgd:algo}) are quite obvious.

In a recent work, \cite{jewell2017} proposes an exact algorithm to obtain the global minimum of $l_0$-regularized optimization in a computational neuroscience context. Consider the optimization problem

$$
\min\limits_{c_1,\ldots,c_n}\left\{\frac12\sum\limits_{i = 1}^n\left(y_i - c_i\right)^2 + \lambda\sum\limits_{i = 2}^{n}\mathbb{I}{\left(c_i - \gamma c_{i-1}\right)}\right\} \ .
$$

By exploiting the sequential time series nature of the problem, one can recast the problem as a changepoint detection problem and use the available results in that literature to design a dynamic programming algorithm. Instead of trying to simultaneously find all $i$'s where $c_i - \gamma c_{i - 1} \neq 0$, the algorithm aims to find them sequentially, from $i = 1$ to $i = n$ at each step. In this sense, it can reach a global minimum within $\mathbb{O}\left(n^2\right)$. The authors further speed up the algorithm by pruning the set of possible changepoints to at each step of the sequential search, and reduce the expected time cost to $\mathbb{O}\left(n\right)$.

\section{Single best replacement (SBR) algorithm \label{sbr}}

The single best replacement (SBR) algorithm, originally developed by \cite{soussen2011}, provides solution to the variable selection regularization (\ref{obj:vs}).  Since (\ref{obj:vs}) and the $l_0$-regularized least squares (\ref{obj:l0}) are equivalent, SBR also provides a practical way to give a sufficiently good local optimal solution to the NP-hard $l_0$ regularization.

Take a look at the objective (\ref{obj:vs}).  For any given variable selection indicator $\gamma$, we have an active set $S = \left\{i: \gamma_i = 1\right\}$, based on which the minimizer $\hat\alpha_\gamma$ of (\ref{obj:vs}) has a closed form.  $\hat\alpha_\gamma$ will set every coefficients outside $S$ to zero, and regress $y$ on $X_\gamma$, the variables inside $S$.  Therefore, the minimization of the objective function can be determined by $\gamma$ or $S$ alone.  Accordingly, the objective function (\ref{obj:vs}) can be rewritten as follows.

\begin{equation}
\label{obj:sbr}
\min\limits_{S}
f_{SBR}(S) = 
\frac12 \left\|y - X_S \beta_S \right\|_2^2
+ \lambda
\left|S\right| \ .
\end{equation}
The SBR algorithm thus tries to minimize $f_{SBR}(S)$ via choosing the optimal $\hat S$.

The algorithm works as follows.  Suppose we start as an initial $S$, usually the empty set.  At each iteration, SBR aimes to find a ``single change of $S$", that is, a single removal from or adding to $S$ of one element, such that this single change decreases $f_{SBR}(S)$ the most.  SBR stops when no such change is available, or in other words, any single change of $\gamma$ or $S$ will only give the same or larger objective value.  Therefore, intuitively SBR stops at a local optima of $f_{SBR}(S)$.

SBR is essentially a stepwise greedy variable selection algorithm.  At each iteration, both adding and removal are allowed, so this algorithm is one example of the ``forward-backward" stepwise procedures.  It's provable that with this feature the algorithm ``can escape from some [undesirable] local minimizers" of $f_{SBR}(S)$ \citep{soussen2015}.  Therefore, SBR can solve the $l_0$-regularized least squares in a sub-optimal way, providing a satisfactory balance between efficiency and accuracy.

We are now writing out the algorithm more formally.  For any currently chosen active set $S$, define a single replacement $S\cdot i, i\in\left\{1, \ldots, p\right\}$ as $S$ adding or removing a single element $i$,

$$
S\cdotp i \defeq
\begin{cases}
S\cup\{i\}, & i\notin S \\
S\backslash \{i\}, & i\in S 
\end{cases} \ .
$$
Then we compare the objective value at current $S$ with all of its single replacements $S\cdot i$, and choose the best one.  SBR proceeds as follows.

\begin{description}
\item Step 0: Initialize $S_0$.  Usually, $S_0 = \emptyset$.  Compute $f_{SBR}(S_0)$.  Set $k = 1$.
\item Step $k$: For every $i \in \{1, \ldots, p\}$, compute $f_{SBR}(S_{k-1} \cdot i)$.  Obtain the single best replacement $j \defeq \arg\min\limits_{i}f_{\text{SBR}}(S_{k-1} \cdot i)$.
\begin{enumerate}
\item If $f_{SBR}(S_{k-1} \cdot j) \geq f_{SBR}(S_{k-1})$, stop. Report $\hat S = S_{k - 1}$ as the solution.
\item Otherwise, set $S_{k} = S_{k-1} \cdot j$, $k = k+1$, and repeat step $k$.
\end{enumerate}
\end{description}

\cite{soussen2011} shows that SBR always stops within finite steps.  With the output $\hat S$, the locally optimal solution to the $l_0$-regularized least squares $\hat\beta$ is just the coefficients of $y$ regressed on $X_{\hat S}$ and zero elsewhere.

In order to include both forward and backward steps in the variable selection process, the algorithm needs to compute $f_{SBR}(S_{k-1} \cdot i)$ for every $i$ at every step.  Because it involves a one-column update of current design matrix $X_{S_{k - 1}}$, this computation can be made very efficient by using the Cholesky decomposition, without explicitly calculating $p$ linear regressions at each step \citep{soussen2011}.  An \texttt{R} package implementation of the algorithm is available upon request.

\section{Applications \label{eg}}

\subsection{Statistical properties of SBR and $l_0$ regularization \label{perform}}

The design matrix $X$ in this experimentation has $n = 120$ rows and $p = 100$ columns.  In order to impose high collinearity in the columns of $X$, we construct it in the following way.

\begin{enumerate}
\item Construct a $p \times d$ matrix $L$ consisting of $N\left(0, 1\right)$ random samples and obtain $\Sigma_X = LL^T + I_p$.  If $d \ll p$, $\Sigma_X$ will have a low-rank structure.  Here we use $d = 5$.
\item Sample each of the $n$ rows of $X$ from the multivariate normal distribution $N_p\left(0, \Sigma_X\right)$.
\item Centralize and normalize the columns of $X$ such that each column sums to zero and has unit $l_2$ norm.
\end{enumerate}
A design matrix $X$ constructed this way has highly collinear columns.  $\beta$ is a highly sparse coefficient vector with $100$ elements, $90$ of which are zero, and the rest $10$ randomly chosen to be $\left\{-5, -4, -3, -2, -1, 1, 2, 3, 4, 5\right\}$.  The noise vector $e$ is sampled from $N\left(0, \sigma_e^2\right)$.  In this setting, the signal-to-noise ratio was defined as 
$$
\text{SNR}=10\log_{10}\left[\frac{\sigma^2(X\beta)}{\sigma_e^2}\right] \ ,
$$
and $\sigma_e$ is determined such that $\text{SNR}=20\text{ dB}$.  Finally, let $y = X\beta + e$ be the observation.

Essentially all kinds of regularization methods, including ridge regression, Lasso, and $l_0$ regularization, share a common difficulty: to find a suitable regularization parameter $\lambda$.  A thorough theoretical treatment on the optimal $\lambda$ hasn't been established, and in practice cross validation is often used.

Meanwhile, one of the advantages of $l_0$ regularization is that, compared with its $l_1$ counterpart, the estimates of coefficients are relatively insensitive to $\lambda$ when it's large enough, as shown in Figure \ref{fig:solutionpath}.  Therefore, we don't have to worry too much about choosing a particularly optimal $\lambda$.

\begin{figure}[!htb]
\centering
   \begin{minipage}{0.49\textwidth}
     \includegraphics[width=\linewidth]{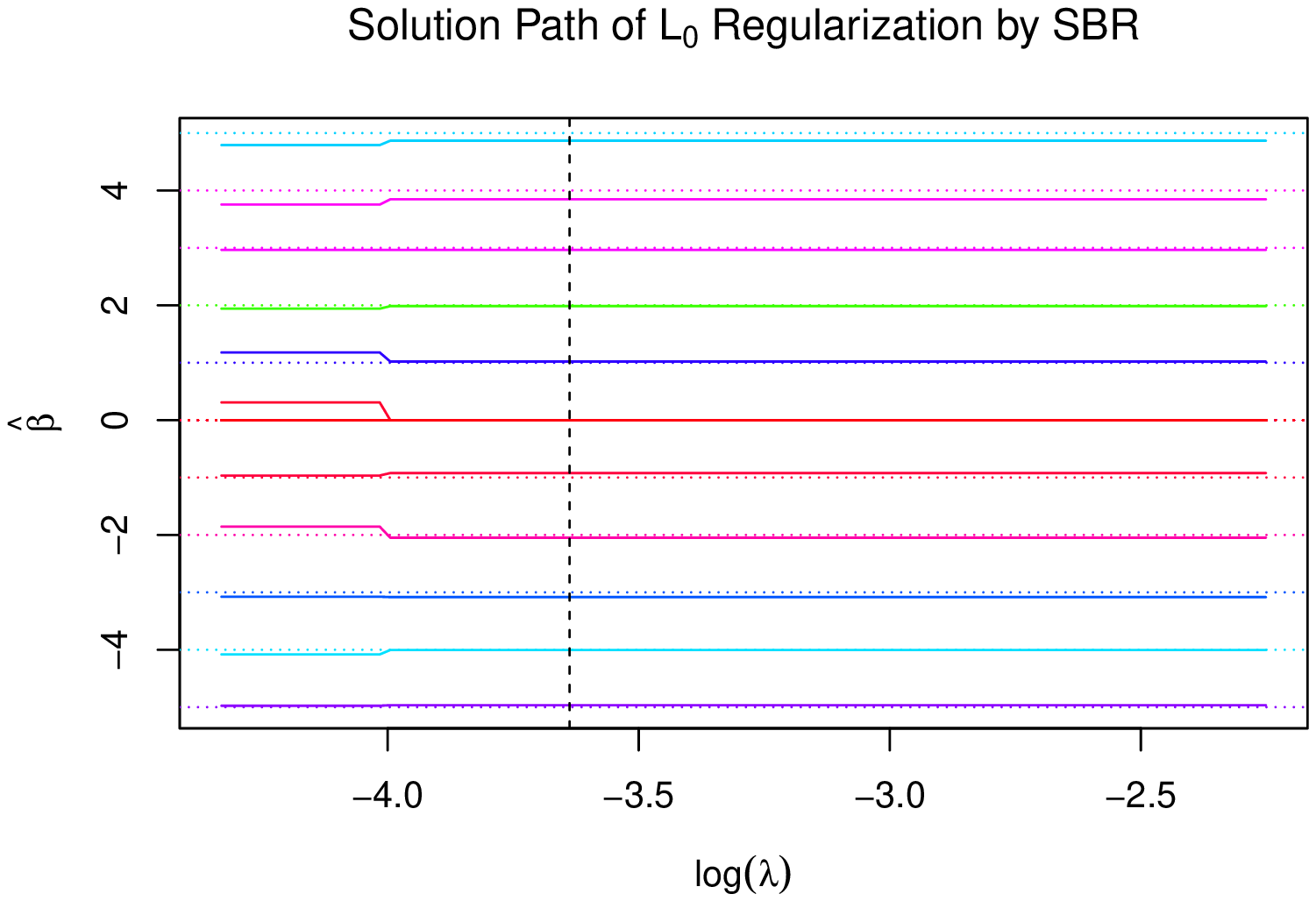}
   \end{minipage}
   \begin {minipage}{0.49\textwidth}
     \includegraphics[width=\linewidth]{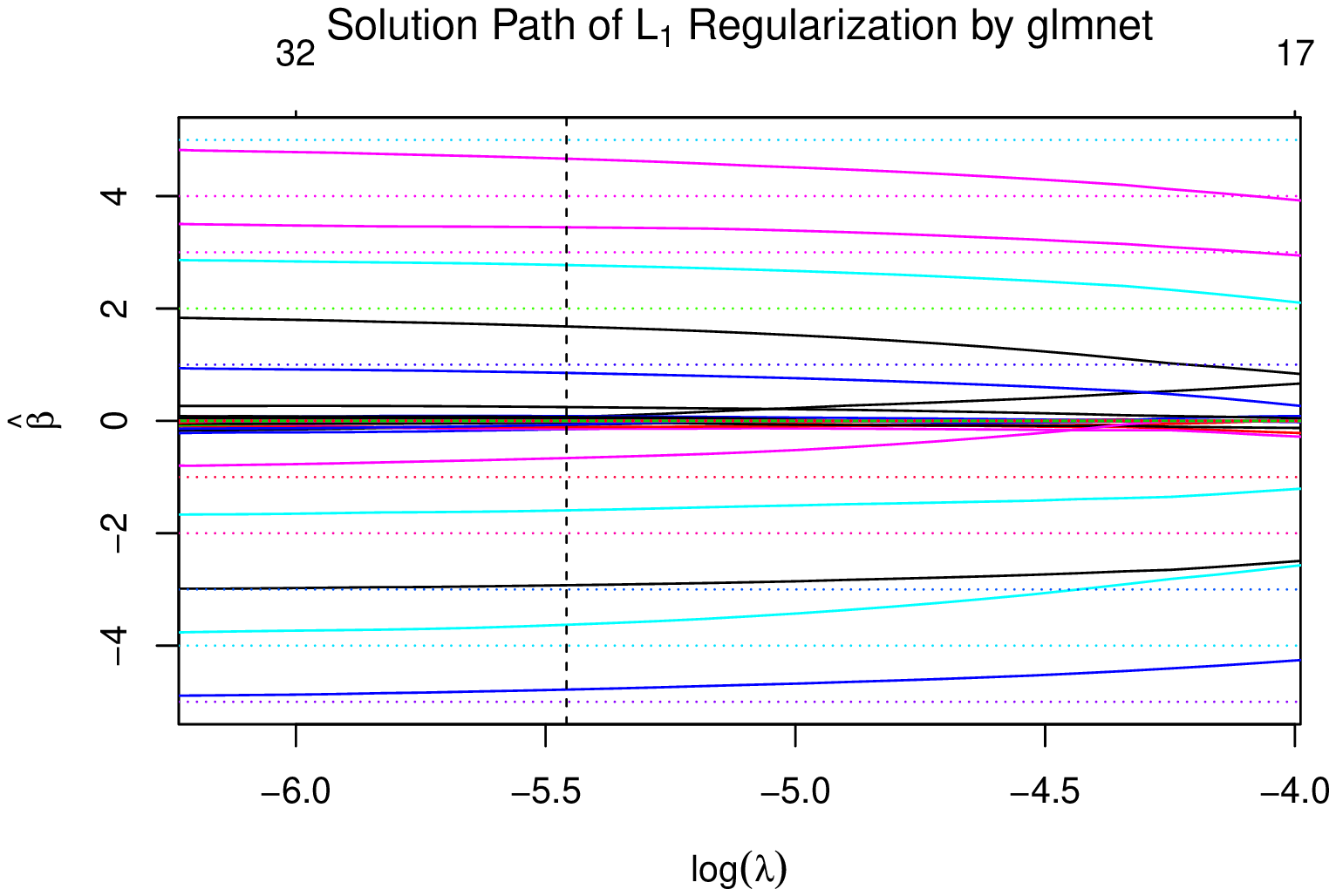}
   \end{minipage}
  \caption{Solution paths of $l_0$ regularization by SBR (left) and its $l_1$ counterpart Lasso by \texttt{glment} \citep{glmnet}.  \textit{The horizontal dotted lines indicate the true values of $\beta$, and the vertical dashed line indicates a $\hat\lambda^{CV}$ chosen by cross validation in $\log$-scale.  The range of $\lambda$ in both plots are from $\frac12\hat\lambda^{CV}$ to $4\hat\lambda^{CV}$.  Once $\lambda$ passes a certain value, the estimates of coefficients of $l_0$ regularization given by SBR is much more accurate and less sensitive to $\lambda$ than those of lasso given by \texttt{glmnet}.}}
\label{fig:solutionpath}
\end{figure}

In a large scale numerical experiment, we conducted $1000$ simulation trials with random $X, y$ generated as above.  We compare SBR with the ordinary least squares (OLS), two sparsity regularization methods, Lasso and elastic net with the tuning parameter $\alpha = 0.5$, and two Bayesian MCMC methods, Bayesian bridge \citep{polson2014} and Bayesian spike-and-slab shrinkage \citep{scott2014}, each with their state-of-the-art implementations.  The regularization parameter $\lambda$ in SBR is determined by a 10-fold cross validation.  For Lasso and elastic net, the regularization parameter $\lambda$ is chosen by the built-in cross validation in \texttt{glmnet} \citep{glmnet}.  For Bayesian bridge, we use the default prior in the package \texttt{BayesBridge} \citep{polson2012bridge}.  For spike-and-slab, the prior inclusion probability is set to be $0.5$, and other hyperparameters are the same as in the default setting in the package \texttt{BoomSpikeSlab} \citep{scott2016}.

Figure \ref{fig:mse} shows the accuracy in estimating $\beta$ for the six methods.  For OLS, Lasso, elastic net, and SBR, $\hat\beta$ are the minimizers of the corresponding optimization problems, and for Bayesian bridge and spike-and-slab, $\hat\beta$ are the posterior means.  SBR performs as well as the gold standard Spike-and-slab and better than all else, including the two widely used convex regularization methods.

\begin{figure}[!htb]
    \centering
    \includegraphics[scale = 0.4]{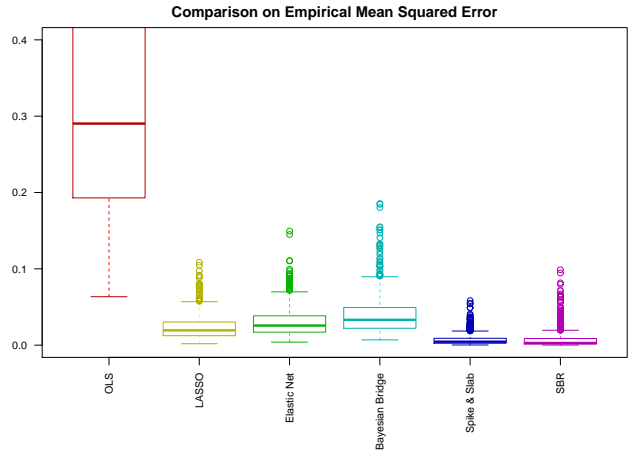}
    \caption{Comparison in estimation accuracy.  \textit{The boxplot depicts empirical mean squared errors of $1000$ simulation trials.  The $l_0$ regularization by SBR and Bayesian posterior mean estimators under the gold standard spike-and-slab priors outperform the convex regularization estimators in this regards.}}
    \label{fig:mse}
\end{figure}

In terms of variable selection, we compare SBR with the other $3$ sparsity-inducing methods, Lasso, elastic net, and Bayesian spike-and-slab.  For spike-and-slab, an coefficient is selected when its posterior inclusion probability is greater than $0.5$.  Figure \ref{fig:varsel} shows that all methods are able to select all the true non-zero coefficients almost all the time.  However, when compared in terms of preventing false selection, SBR and spike-and-slab are the best, whereas Lasso and elastic net tend to drastically over-select.

\begin{figure}[!htb]
    \centering
   \begin{minipage}{0.49\textwidth}
     \includegraphics[width=\linewidth]{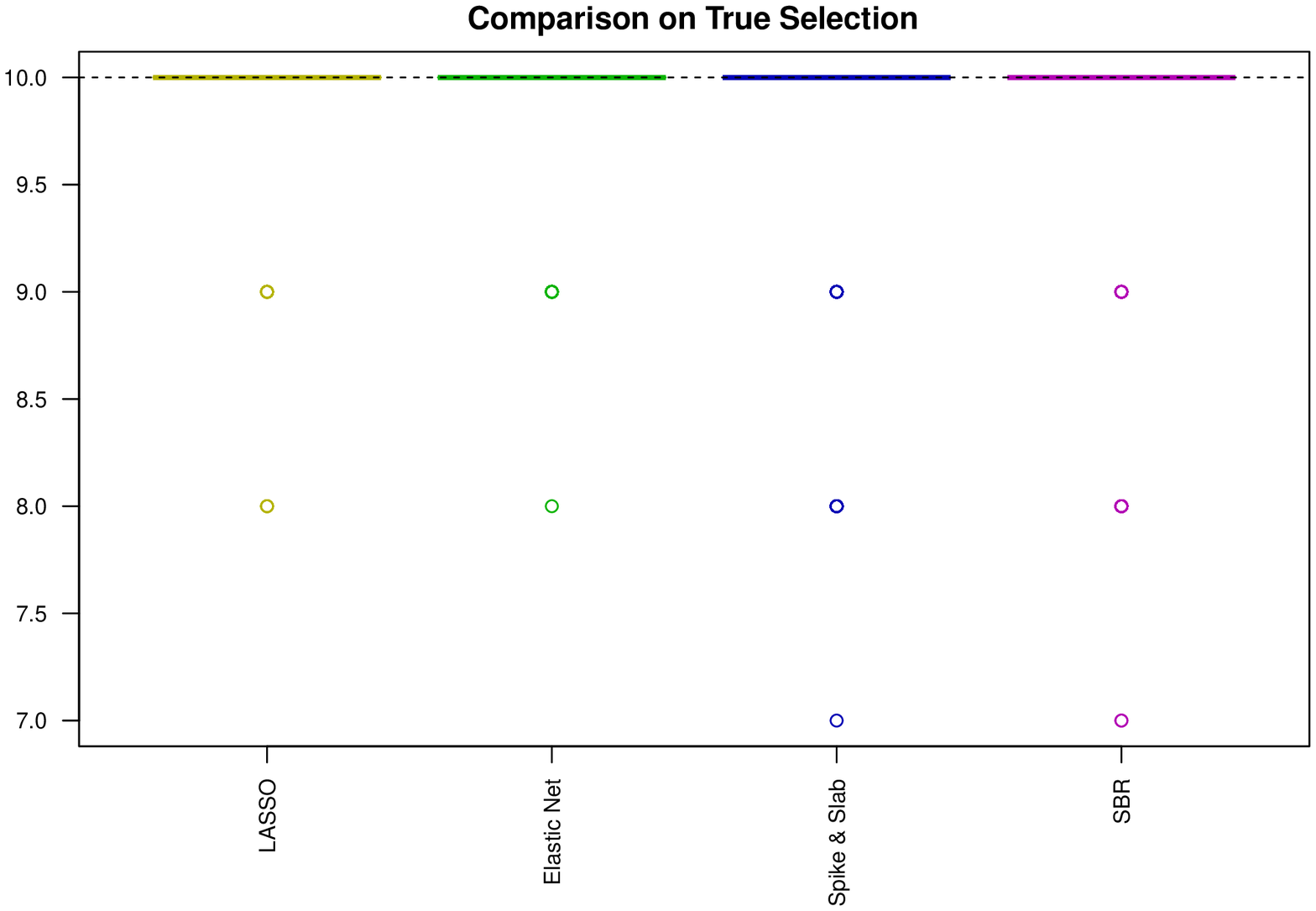}
   \end{minipage}
   \begin {minipage}{0.49\textwidth}
     \includegraphics[width=\linewidth]{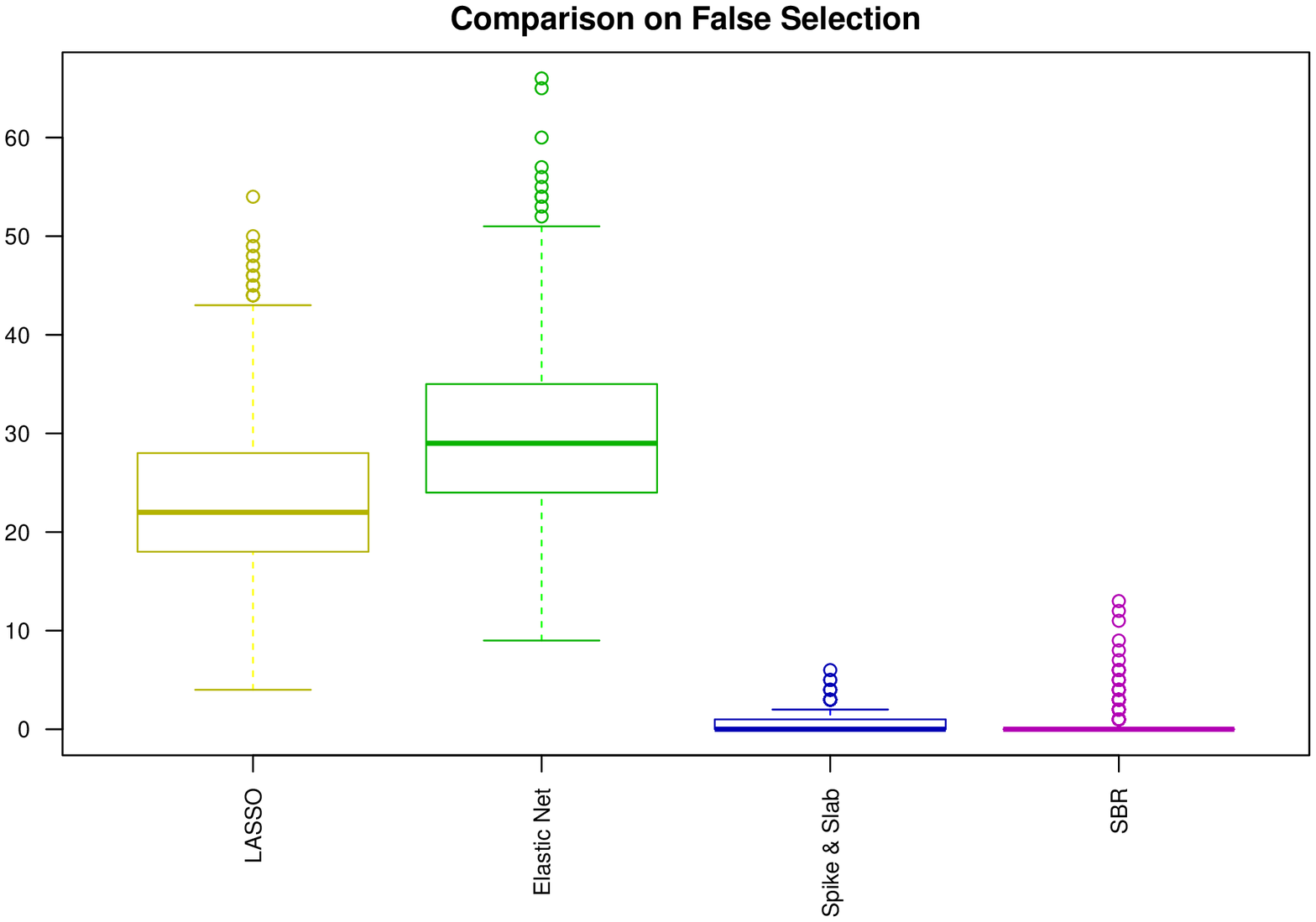}
   \end{minipage}
    \caption{Comparison in variable selection accuracy.  \textit{All $4$ methods generally select all of the $10$ (horizontal dotted line) true non-zero coefficients almost all the time.  SBR and spike-and-slab are on par of successfully preventing over-selecting, but Lasso and elastic net tend to produce a lot of false selections.}}
    \label{fig:varsel}
\end{figure}

Recently, \cite{hastie2017} compared Lasso with best subset selection \citep{bertsimas2016}, forward stepwise selection, and found similar phenomenon.  Namely, the performance of best subset selection and forward stepwise selection is overall similar, and both tend to outperform Lasso in the high signal-to-noise setting.  Since SBR is a forward-backward stepwise selection algorithm, it's no surprise then that SBR should give better results than Lasso in such setting.

\subsection{Computational efficiency and scalability of SBR}

In section \ref{perform} we've shown that the $l_0$ regularization has superior statistical properties in terms of both minimizing the estimation risk and selecting correct variables, especially its statistical performance improvement on convex regularization methods such as Lasso and elastic net.

Full Bayesian spike-and-slab performs very well statistically.  In order to achieve this good performance, spike-and-slab needs to do a complete MCMC sampling, and this task could take a significant amount of time, especially in high-dimensional settings, whereas regularization methods are usually able to handle large-scale computation efficiently.

In order to compare the computational efficiency of different methods, two sets of experiments, one with $n = 120, p = 100$, the other $n = 300, p = 200$, are run, and the results are plotted in Figure \ref{fig:time}.  SBR, as well as Lasso and elastic net, is almost as efficient as OLS, and only changes proportionally when the size of the problem increases.  On the contrary, the two full Bayesian methods, Bayesian bridge and especially spike-and-slab, are costly and scale badly with the problem size.  Actually when $n = 300, p = 200$, it could take as much as $40$ minutes to run even one spike-and-slab MCMC, whereas SBR finishes all $200$ simulation trials under $10$ seconds.  When $n$ and $p$ are in thousands, spike-and-slab is computationally intractable.

\begin{figure}[!htb]
\centering
\includegraphics[scale = 0.55]{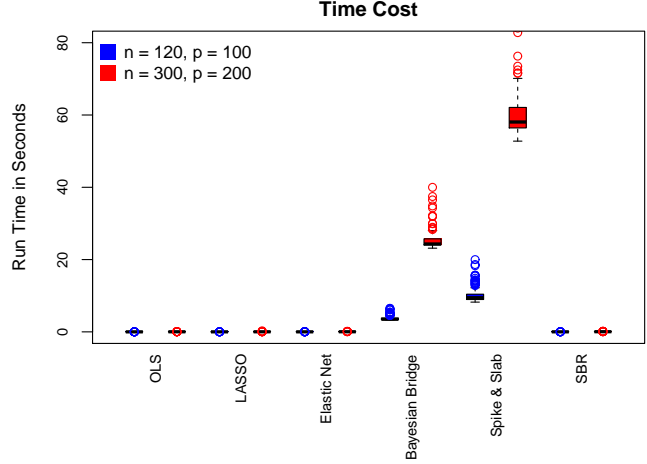}
  \caption{Time cost by different methods for different problem sizes.  \textit{SBR is as efficient as convex regularization methods Lasso and elastic net, whereas full Bayesian MCMC methods Bayesian bridge and spike-and-slab take significantly longer time.  When the problem size increases from $n = 120, p = 100$ to $n = 300, p = 200$, averaging over $200$ simulation trials, the time cost of Lasso changes from $0.008$ to $0.024$ seconds,  SBR $0.011$ to $0.046$ seconds, yet that of spike-and-slab MCMC surges from $10$ to more than $170$ seconds.}}
\label{fig:time}
\end{figure}

\subsection{Diabetes data \label{diabetes}}

Now we examine the performance of SBR on the classic diabetes data, available in the \texttt{R} package \texttt{lars} \citep{efron2004}.  The design matrix $X$ has $64$ columns, including all $10$ biochemical attributes and certain interactions.  Each column of $X$ has been normalized to have zero mean and unit $l_2$ norm.  The response $y$ is centralized to have zero mean.  We compare SBR with sparsity-inducing methods including Lasso, elastic net, and spike-and-slab priors, with the same settings as in Section \ref{perform}, and $\lambda$ determined by cross validation.  The results shown in Table \ref{table:sel} indicate that SBR's performance on variable selection is in line with popular sparse linear regression alternatives.

\begin{table}[h!]
\centering
\begin{tabular}{| C || C  | C | C | C |} 
 \hline
 Variable & Lasso & Elastic Net & Spike \& Slab & SBR\\
 \hline\hline
 \texttt{sex} 							& $\checkmark$ & $\checkmark$ & $\checkmark$ & $\checkmark$\\ 
 \texttt{bmi} 							& $\checkmark$ & $\checkmark$ & $\checkmark$ & $\checkmark$\\
 \texttt{map} 							& $\checkmark$ & $\checkmark$ & $\checkmark$ & $\checkmark$\\
 \texttt{hdl} 							& $\checkmark$ & $\checkmark$ & $\checkmark$ & $\checkmark$\\
 \texttt{ltg} 							& $\checkmark$ & $\checkmark$ & $\checkmark$ & $\checkmark$\\
 \texttt{glu} 							& $\checkmark$ & $\checkmark$ & --- 		 & ---			     \\
 $\texttt{age}^2$ 						& $\checkmark$ & $\checkmark$ & --- 		 & ---			     \\
 $\texttt{bmi}^2$ 						& $\checkmark$ & $\checkmark$ & --- 		 & ---			     \\
 $\texttt{glu}^2$ 						& $\checkmark$ & $\checkmark$ & --- 		 & $\checkmark$   \\
 $\texttt{age}\cdot\texttt{sex}$ 		& $\checkmark$ & $\checkmark$ & $\checkmark$ & $\checkmark$\\
 $\texttt{age}\cdot\texttt{map}$ 		& $\checkmark$ & $\checkmark$ & --- 		 & ---			\\
 $\texttt{age}\cdot\texttt{ltg}$ 		& $\checkmark$ & $\checkmark$ & --- 		 & ---			\\
 $\texttt{age}\cdot\texttt{glu}$ 		& $\checkmark$ & $\checkmark$ & --- 		 & ---			\\
 $\texttt{sex}\cdot\texttt{map}$ 		& ---		   & $\checkmark$ & --- 		 & ---			\\
 $\texttt{bmi}\cdot\texttt{map}$ 		& $\checkmark$ & $\checkmark$ & --- 		 & $\checkmark$ \\
\hline
\end{tabular}
\caption{Variable selection for the diabetes data.  \textit{Out of all $64$ variable, only those selected by at least one method are shown, chosen variables marked by $\checkmark$, and --- otherwise.  Lasso and elastic net select the most variables, whereas spike-and-slab the least.  Compared with these two extremes, SBR selects all variables chosen by spike-and-slab, but no variables not chosen by Lasso and elastic net.  The result indicates that SBR performs a reasonable variable selection task.}}\label{table:sel}
\end{table}

\section{Discussion \label{dis}}

Bayesian $l_0$ regularization can be solved using a fast and scalable single best replacement (SBR) algorithm.  In variable selection, this estimator possesses much of the statistical properties of spike-and-slab priors.  We provide theoretical links between the spike-and-slab MAP estimator and $l_0$ regularization.

We also explore the connection between regularized MAP estimators and posterior means \citep{strawderman2013}.  Tweedie--Masreliez construction of the posterior mean is re-interpreted as a proximal update rule.  This proximal update identity shows how the sparse posterior mode can be viewed as a posterior mean under a suitably defined prior.  Bernoulli-Gaussian (BG) and Bernoulli-Laplace (BL) priors are used for illustration.  Our approach demonstrates how regularized estimators can have good out-of-sample mean squared error.

In simulated and real data applications, SBR performs favorably compared with popular convex regularization methods such as Lasso and elastic net, as well as full Bayesian sampling methods including Bayesian bridge and spike-and-slab priors.

Recently non-convex feature selection methods for sparse signals estimation have gained increasing attention from the statistical learning community, including the classic SCAD penalty \citep{fan2001}, $l_q$ penalty \citep{marjanovic2013}, and horseshoe regularization \citep{bhadra2017horseshoe}.  There are a number of future directions for research, such as regularized logistic regression \citep{gramacy2012} and structural sparsity learning \citep{polson2017proximal}.  A comprehensive theoretical treatment and empirical comparison on different non-convex regularizations on the trade-off between statistical accuracy and computational efficiency remains open.

\bibliography{ref.bib}

\end{document}